\documentclass[letterpaper, 10 pt, conference]{ieeeconf}  

\IEEEoverridecommandlockouts                              

\usepackage{cite}
\usepackage{amsmath,amssymb,amsfonts}
\usepackage{graphicx}
\usepackage{textcomp}
\usepackage{xcolor}
\usepackage{epstopdf}
\usepackage{epsfig} 
\usepackage{times} 
\usepackage{tabularx}
\usepackage{dcolumn}
\usepackage{color} 
\usepackage{algorithm}
\usepackage[algo2e,linesnumbered,vlined,ruled]{algorithm2e} 
\usepackage{balance}
\usepackage{accents}
\usepackage{resizegather}
\usepackage{theorem} 
\usepackage{mathtools}
\usepackage{multirow}
\usepackage{amsmath}
\DeclareMathOperator*{\argmin}{argmin}
\usepackage{bm}
\usepackage{wrapfig}
\usepackage{graphicx}
\usepackage{subcaption}
\usepackage{caption}
\usepackage{mdframed}
\usepackage{algpseudocode}
\usepackage{hyperref}

\usepackage{caption}
\newtheorem{definition}{\bf{Definition}}

\newtheorem{problem}{\bf{Problem}}
\newtheorem{remark}{\bf{Remark}}
\newtheorem{theorem}{\bf{Theorem}}

\newcommand{\LTLUNTIL}{\ensuremath{U}}
\newcommand{\LTLEVENTUALLY}{\ensuremath{ F}}
\newcommand{\LTLALWAYS}{\ensuremath{ G }}

\newcommand{\notltl}{\neg}
\newcommand{\andltl}{\wedge}
\newcommand{\orltl}{\vee}

\newcommand{\remainingtime}{\tau}

\SetKw{KwAnd}{and}

\title{\LARGE \bf
Shielded Reinforcement Learning Under Dynamic Temporal Logic Constraints

\author{Sad{\i}k Bera Y\"{u}ksel, Ali Tevfik Buyukkocak, and Derya Aksaray}
\thanks{S.B. Y\"{u}ksel is a PhD student in the Department of Electrical and Computer Engineering at Northeastern University.}%
\thanks{A.T. Buyukkocak is a Postdoctoral Researcher in the Department of Electrical and Computer Engineering at Northeastern University.}
\thanks{D. Aksaray is an Assistant Professor in the Department of Electrical and Computer Engineering at Northeastern University.}
}

\begin{document}

\maketitle
\thispagestyle{empty}
\pagestyle{empty}

\begin{abstract}
Reinforcement Learning (RL) has shown promise in various robotics applications, yet its deployment on real systems is still limited due to safety and operational constraints. The safe RL field has gained considerable attention in recent years, which focuses on imposing safety constraints throughout the learning process. However, real systems often require more complex constraints than just safety, such as periodic recharging or time-bounded visits to specific regions. Imposing such spatio-temporal tasks during learning still remains a challenge. Signal Temporal Logic (STL) is a formal language for specifying temporal properties of real-valued signals and provides a way to express such complex tasks. In this paper, we propose a framework that leverages sequential control barrier functions and model-free RL to ensure that the given STL tasks are satisfied throughout the learning process. Our method extends beyond traditional safety constraints by enforcing rich STL specifications, which can involve visits to dynamic targets with unknown trajectories. We also demonstrate the effectiveness of our framework through various simulations.
\end{abstract}


\section{INTRODUCTION}
Reinforcement Learning (RL) has achieved remarkable success in many robotics applications, such as manipulation, locomotion, and navigation \cite{RLinRoboticsSurvey}. An RL agent learns to perform a task in the most optimal way by interacting with an environment and receiving feedback in the form of rewards. The learning process typically involves exploring a large number of state-action pairs to learn the optimal policy that maximizes the expected reward. This enables the agent to learn different complex tasks in unknown environments without requiring an explicit model of the system dynamics.

Despite its success in policy generation, applications of RL to real robotic systems are still limited. Random exploration during learning might lead to unsafe behaviors, which can harm the robot or its surroundings. These safety concerns, e.g., collision avoidance, make the real-world training impractical for many applications. As a result, constrained/safe RL has become an active research area (e.g., \cite{garcia2015comprehensive}). 
Existing approaches can be broadly categorized into two groups: methods achieving a safe policy by the end of the learning (e.g., \cite{achiam2017constrainedpolicyoptimization, wachi2018safe}), and methods treating safety as a hard constraint by enforcing it during the learning (e.g., \cite{cheng2019endtoendsafereinforcementlearning, emam2022safereinforcementlearningusing}).

However, real-world training may require several operational constraints in addition to safety. Real systems are subject to practical limitations that can interrupt learning and require human intervention. For instance, robots may need periodic battery recharging or resetting between episodes, which can interrupt autonomous learning. These requirements can be defined as spatio-temporal tasks that must be satisfied to continue the learning process. In the literature, a common approach to encode such tasks is temporal logic (TL), a formal language for specifying and reasoning about complex behaviors \cite{baier2008principles}. In this paper, we focus on Signal Temporal Logic (STL) \cite{maler2004monitoring}, which is an expressive language to specify the temporal properties of real-valued signals.

Several works explored integrating temporal logic into RL. 
In \cite{aksaray2016q}, reward shaping is used to learn a policy that satisfies the STL specification, but the specification is not enforced during training. In \cite{li2019temporallogicguidedsafe}, syntactically co-safe Truncated Linear Temporal Logic (scTLTL) is used for reward shaping, and safety is enforced during learning by using control barrier functions (CBFs). However, this method only considers hard constraints in the form of safety (i.e., confining the agent to a safe region) and cannot express explicit time windows for the tasks due to the limited expressiveness of scTLTL. The work in \cite{safeRLforSTL} aims to learn the optimal policy to satisfy the given STL tasks, but focuses only on safety constraints during learning 
and does not leverage the full STL syntax. Moreover, \cite{aksaray2021probabilistically}, \cite{lin2023reinforcement}, and \cite{lin2024probabilistic} enforce Bounded TL specifications and provide probabilistic satisfaction guarantees during learning. However, the approach is limited to discrete state spaces. Furthermore, none of these works consider time-varying specifications involving dynamic target sets. The work in \cite{BUYUKKOCAK2024104681} addresses this gap by proposing sequential CBFs to satisfy STL tasks with dynamic targets, but it is not formulated in an RL setting. These limitations are critical for real-world learning, where constraint violations during training can interrupt learning, require human intervention, and reduce autonomy.

The contributions of this paper are summarized as follows:
\begin{itemize}
    \item We propose a model-free RL framework that enforces STL specifications throughout training to ensure that complex spatio-temporal constraints are satisfied during learning rather than only by the final learned policy.
    \item The framework supports a rich class of STL tasks, including those with dynamic targets with unknown trajectories, and enforces realistic operational constraints, such as periodic visits to a charger or resetting to a desired initial state between learning episodes.
    \item We demonstrate the effectiveness of the proposed framework through simulation studies and verify its ability to enforce STL constraints during learning.
\end{itemize}

\section{PRELIMINARIES}
\subsection{Signal Temporal Logic}
In this paper, we specify the desired system behaviors using Signal Temporal Logic (STL).

\begin{definition}[Signal Temporal Logic]Signal Temporal Logic (STL) \cite{STL} is a formal language used to specify temporal properties of a system over real-valued signals. In this paper, we use the following STL syntax:
\begin{flalign}
\mathnormal{\Phi} &::= \mathnormal{\Phi_1} \land \mathnormal{\Phi_2} \mid \mathnormal{\Phi}_1 \lor \mathnormal{\Phi}_2 \mid \LTLEVENTUALLY_{[a,b]}\phi \mid \LTLALWAYS_{[a,b]}\phi \mid \varphi_1 \LTLUNTIL_{[a,b]} \varphi_2, \notag \\
\phi &::= \varphi \mid \neg \phi \mid \LTLEVENTUALLY_{[c,d]} \varphi \mid \LTLALWAYS_{[c,d]} \varphi , \label{eq:STL_syntax} \\
\varphi &::= \mu \mid \neg \varphi \mid \varphi_1 \land \varphi_2 \mid \varphi_1 \lor \varphi_2 , \notag
\end{flalign}
where $a,b,c,d \in \mathbb{R}_{\geq 0}$ are finite non-negative time bounds with $b \geq a$ and $d \geq c$;  $\LTLEVENTUALLY$, $\LTLALWAYS$, $\LTLUNTIL$ denote the finally (eventually), globally (always) and until temporal operators, respectively; $\notltl$, $\andltl$, $\orltl$ denote the negation, conjunction and disjunction Boolean operators, respectively; $\mathnormal{\Phi}, \phi, \varphi$ are STL specifications, and $\mu$ is a predicate of the form $p(\mathbf{x}(t)) \sim 0$, where $\mathbf{x}: \mathbb{R}_{\geq 0} \rightarrow \mathbb{R}$ is a real-valued signal, $p:\mathbb{R}^n \rightarrow \mathbb{R}$ is a function, and $\sim \in \{\geq, \leq\}$ is a relational operator.
\end{definition}

For any signal $\mathbf{x}$, let $\mathbf{x}(t)$ denote its value at time $t$ and $(\mathbf{x},t)$ denote the part of the signal starting from $t$. The satisfaction of an STL specification over $(\mathbf{x},t)$ is defined as follows:
\begin{flalign*}
&(\mathbf{x}, t) \models \mu \iff p(\mathbf{x}(t)) \sim 0, & \\
&(\mathbf{x}, t) \models \neg \mu \iff \neg((\mathbf{x}, t) \models \mu), & \\
&(\mathbf{x}, t) \models \mathnormal{\Phi_1} \land \mathnormal{\Phi_2} \iff (\mathbf{x}, t) \models \mathnormal{\Phi_1} \text{ and } (\mathbf{x}, t) \models \mathnormal{\Phi_2}, & \\
&(\mathbf{x}, t) \models \mathnormal{\Phi_1} \lor \mathnormal{\Phi_2} \iff (\mathbf{x}, t) \models \mathnormal{\Phi_1} \text{ or } (\mathbf{x}, t) \models \mathnormal{\Phi_2}, & \\
&(\mathbf{x}, t) \models \mathnormal{\Phi_1} U_{[a,b]} \mathnormal{\Phi_2} \iff \exists t' \in [t+a, t+b],\ (\mathbf{x}, t') \models \mathnormal{\Phi_2} & \\
&\quad \text{and } \forall t'' \in [t, t'],\ (\mathbf{x}, t'') \models \mathnormal{\Phi_1}, & \\
&(\mathbf{x}, t) \models \LTLALWAYS_{[a,b]} \mathnormal{\Phi} \iff \forall t' \in [t+a, t+b],\ (\mathbf{x}, t') \models \mathnormal{\Phi}, & \\
&(\mathbf{x}, t) \models \LTLEVENTUALLY_{[a,b]} \mathnormal{\Phi} \iff \exists t' \in [t+a, t+b],\ (\mathbf{x}, t') \models \mathnormal{\Phi}. &
\end{flalign*}

The operator $\LTLEVENTUALLY_{[a,b]}\mathnormal{\Phi}$ specifies that $\mathnormal{\Phi}$ should be satisfied at least once in the interval $[t+a, t+b]$, while $\LTLALWAYS_{[a,b]}\mathnormal{\Phi}$ specifies that $\mathnormal{\Phi}$ must hold at all times within the interval $[t+a, t+b]$. On the other hand, the $\mathnormal{\Phi_1} U_{[a,b]} \mathnormal{\Phi_2}$ operator specifies that $\mathnormal{\Phi_1}$ must hold until $\mathnormal{\Phi_2}$ becomes true in the interval $[t+a, t+b]$.
The horizon of an STL specification $\mathnormal{\Phi}$, denoted as $hrz(\mathnormal{\Phi})$, is defined as the number of future samples required to determine the truth value of the specification \cite{dokhanchi2014onlinemonitoringtemporallogic}. 

In this paper, we consider STL specifications $\mathnormal{\Phi}$ composed of multiple tasks $\mathnormal{\Phi}_i$ in conjunction as follows: 
\begin{equation}
\mathnormal{\Phi} := \mathnormal{\Phi_1} \land \mathnormal{\Phi_2} \land \cdots \land \mathnormal{\Phi_k}.
\label{overallSTL}
\end{equation}

Each task $\mathnormal{\Phi}_i$ includes one or more temporal operators applied on inner specification $\varphi_i$ as described in the syntax (\ref{eq:STL_syntax}). For instance, $\mathnormal{\Phi}_i = \LTLALWAYS_{[a,b]} \LTLEVENTUALLY_{[c,d]} \varphi_i$ or $\mathnormal{\Phi}_j = \LTLEVENTUALLY_{[a,b]} (\varphi_{j,1} \lor \varphi_{j,2})$, where $\varphi_j = \varphi_{j,1} \lor \varphi_{j,2}$ is the inner specification associated with task $\mathnormal{\Phi}_j$.

\subsection{Time-Varying (Zeroing) Control Barrier Functions}
Consider a control affine system of the form 
\begin{equation}
\Dot{x}=f(x)+g(x)u,
\end{equation}
where $f:\mathbb{R}^n\to\mathbb{R}^n$ and $g:\mathbb{R}^n\to\mathbb{R}^{n\times m}$ are locally Lipschitz continuous, $x \in \mathcal{X}\subseteq\mathbb{R}^n$ is the state and $u \in \mathcal{U}\subseteq\mathbb{R}^m$ is the control. Control barrier functions (CBFs) are used to render a desired set of states forward invariant \cite{Ames_2017}. That is, let $\boldsymbol{h}_\text{CBF}:\mathcal{X}\to\mathbb{R}$ be a differentiable function and define the zero superlevel set $\mathcal{C}=\{x \in \mathcal{X}\ |\ \boldsymbol{h}_\text{CBF}(x)\geq0 \}$. If the system starts in the zero superlevel set, i.e., $x(0) \in \mathcal{C}$, then CBF conditions ensure that $x(t)$ remains in $\mathcal{C}$ for all $t \geq 0$.

This concept is extended to time-varying sets by introducing time-varying CBFs \cite{Lindemann2019}. Specifically, a differentiable function $\boldsymbol{b}(x,t)\!:\!\mathcal{X}\!\times\! \mathbb{R}_{\geq0}\!\to\!\mathbb{R}$ is defined as a time-varying CBF and the corresponding time-varying set $\mathcal{C}(t)=\{x\in\mathcal{X}\ |\ \boldsymbol{b}(x,t)\geq 0\}$ is rendered forward invariant by the input $u \in \mathcal{U}$, if $x(0)\in \mathcal{C}(0)$ and there exists a locally Lipschitz continuous class $\mathcal{K}$ function $\alpha:\mathbb{R}_{\geq 0}\to\mathbb{R}_{\geq0}$ such that 

\begin{equation}\label{eq:TVZCBF}
\small
\sup_{u \in\mathcal{U}} \frac{\partial\boldsymbol{b}(x,t)}{\partial x}^T(f(x)+g(x)u)+\frac{\partial\boldsymbol{b}(x,t)}{\partial t}+\alpha(\boldsymbol{b}(x,t))\geq0.
\end{equation}

Such time-varying CBFs are useful for enforcing STL tasks as they can encode temporal properties. For example, we can steer an agent to a target region within a specific time bound by using time-varying CBFs.

\section{PROBLEM STATEMENT}
We consider a model-free RL setting, where the agent tries to learn the optimal policy through interactions with an unknown environment. The environment is modeled as a finite-horizon Markov Decision Process (MDP), denoted as $\mathcal{M} = (\mathcal{X}, \mathcal{U}, p, r)$ where 
\begin{itemize}
\item $\mathcal{X} \subseteq \mathbb{R}^n$ is the continuous state space;

\item $\mathcal{U} \subseteq \mathbb{R}^m$ is the continuous input space;

\item $p:\mathcal{X} \times \mathcal{U} \times \mathcal{X} \rightarrow [0,1]$ is the state transition function; 

\item $r: \mathcal{X} \times \mathcal{U} \times \mathcal{X} \rightarrow \mathbb{R}$ is the reward function.
\end{itemize}
The agent interacts with the environment over episodes of fixed length $T$. At each time step, it selects an action according to a policy $\pi$, which may be deterministic $\pi: \mathcal{X} \rightarrow \mathcal{U}$, or stochastic $\pi: \mathcal{X} \times \mathcal{U} \rightarrow [0,1]$.

The objective of the agent is to learn the optimal policy that maximizes the total expected episodic reward while ensuring that the \emph{state trajectory generated in each episode} satisfies a desired STL specification. Unlike prior work that focuses on learning how to satisfy the given STL specification through reward shaping (e.g.,~\cite{safeRLforSTL,aksaray2016q}),  the reward function in our problem setting is only associated with the main objective (e.g., minimizing energy, maximizing information gain, or task efficiency) and decoupled from the STL specification. Thus, we treat the STL specification as a hard constraint and aim to satisfy it \emph{throughout the entire learning process}. 

Enforcing STL specifications during training serves as a formal certificate of desired behavior in real-world applications.  
For example, in an autonomous driving scenario, the STL specifications may require the agent to keep its speed below the speed limit and maintain a safe following distance from the other cars, while the reward function encourages fast traversal to the destination. Similarly, in a UAV surveillance scenario, the agent may receive rewards for flying over the surveillance areas, while an STL specification mandates periodic visits to a charging station during the learning so that the agent can continue the learning process without human intervention. 
In such scenarios, enforcing STL constraints throughout learning is essential as violations may disrupt learning (e.g., accidents, undesirable behavior).


In many practical scenarios, STL specifications may involve visits to dynamic target sets whose locations vary over time. For instance, in the previous UAV surveillance scenario, the charging station might be mounted on a moving ground vehicle. Dynamic target requirements can be encoded using STL formulas by imposing temporal constraints on visiting dynamic targets. However, introducing dynamic targets into the learning framework is challenging, especially when their future trajectories are unknown to the agent. In such cases, the agent must reason about the future locations of the target and choose its actions accordingly during learning. 

In our problem setting, we assume the agent has single integrator dynamics of the form:
\begin{equation}
    \dot{x} = u + d, \quad \|u\| \leq u_{\max}, \quad \|d\| \leq d_{\max},
\label{agent_dynamics}
\end{equation}
where $u$ is the control input and $d$ is an unknown but bounded disturbance with known bound $d_{\max}$. 
Each target $i$ is also assumed to follow single integrator dynamics:
\begin{equation}
    \dot{x}_i^\text{target} = u_i^\text{target}, \quad \|u_i^\text{target}\| \leq u_{i,\max}^\text{target},
\label{target_dynamics}
\end{equation}
where the control input $u_i^\text{target}$ is unknown but bounded by a known constant $u_{i,\max}^\text{target}$. This abstraction captures realistic movement while allowing the agent to estimate the future states of the targets conservatively. Accordingly, we formulate the following problem:

\begin{problem}
Given a finite horizon MDP $\mathcal{M} = (\mathcal{X}, \mathcal{U}, p, r)$ where the unknown state transition function $p$ is induced by the dynamics in (\ref{agent_dynamics}) and the reward function $r$ is unknown,
 and an STL specification $\mathnormal{\Phi}$ which incorporates time-varying targets governed by the dynamics as in (\ref{target_dynamics}), find the optimal policy that maximizes the total expected episodic reward
\begin{equation}
   \pi^* = \arg\max_{\pi} \mathbb{E}^{\pi}\left[ \sum_{t=0}^{T} \gamma^t r_t \right],
\end{equation}
such that, for every episode $j$ in the learning
\begin{equation}
    (\mathbf{x}_j, 0) \models \mathnormal{\Phi} , \quad \forall j\geq 0,
\end{equation}
where $0<\gamma \leq 1$ is the discount factor, $\mathbf{x}_j : \{0,1,\dots,T\} \rightarrow \mathcal{X}$ is the signal representing the state trajectory in episode $j$  over the finite episode horizon $T \geq hrz(\Phi)$.


\label{problem}
\end{problem}

\section{PROPOSED SOLUTION}
In this section, we first discuss the challenges of enforcing STL tasks during learning. Next, we review the sequential CBF and explain how they are used to satisfy STL tasks involving visits to dynamic target sets. Finally, we explain our proposed framework that incorporates sequential CBFs into a model-free RL setting and introduce the main algorithm enforcing the satisfaction of STL tasks throughout the learning process. An overview of our proposed framework is illustrated in Fig. \ref{fig:framework}.


\begin{figure}[!htb]
    \centering
    \includegraphics[width=0.9\linewidth]{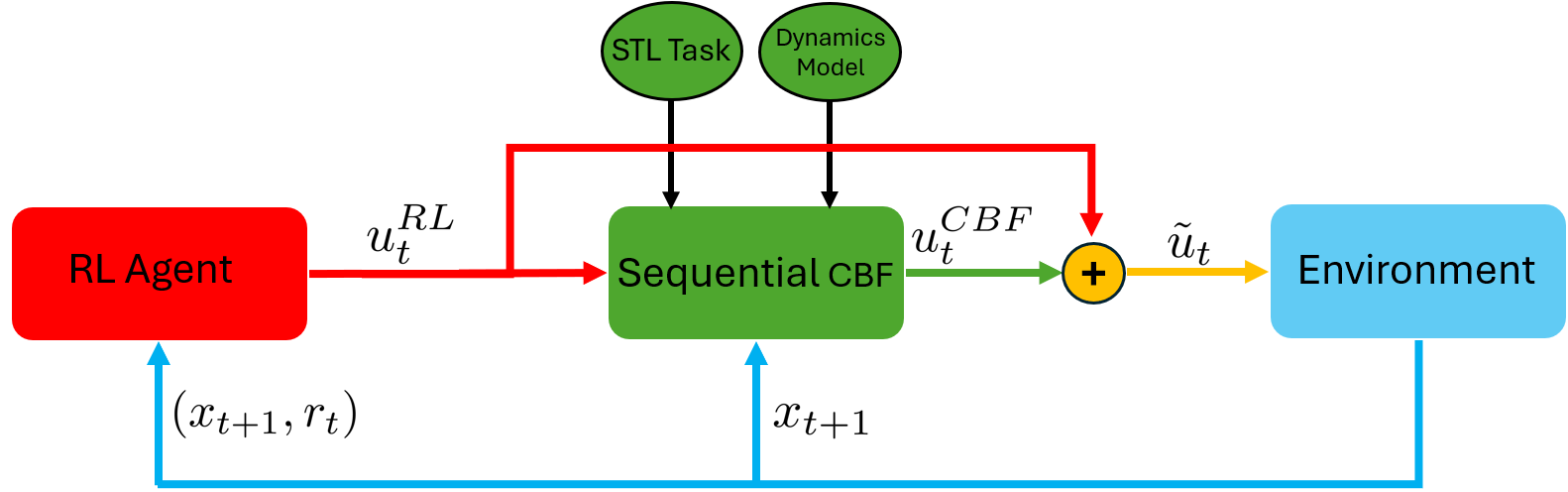}
    \captionsetup{font=footnotesize, labelfont=footnotesize}
    \caption{Overview of the proposed framework. At each time step, the RL agent outputs an unconstrained control input $u_t^{RL}$, which is then adjusted by a corrective input $u_t^{CBF}$ computed using sequential CBFs to satisfy the given STL task. The final control $\tilde{u}_t$ is applied to the system, and the resulting next state and reward $(x_{t+1}, r_t)$ are fed back to the RL agent.}
    \label{fig:framework}
\end{figure}
\vspace{-2mm}
\subsection{Reinforcement Learning Under STL Constraints}
\label{subsec:RL_STL}
Encoding STL specifications as hard constraints in RL settings introduces several challenges. First, incorporating the STL constraint into the reward function does not guarantee its satisfaction during the early stages of learning. Since the goal of RL is to maximize reward through trial and error, the satisfaction of the STL specification is eventually achieved. As a result, violations can occur throughout the training phase, which is undesirable in both simulated and real-world settings.
Second, a common approach to enforce hard constraints during learning is by using CBFs (e.g., \cite{emam2022safereinforcementlearningusing,cheng2019endtoendsafereinforcementlearning, li2019temporallogicguidedsafe,safeRLforSTL}). Standard CBFs require defining a safe set and confining the agent to that set through control corrections. The safe set is stationary in the case of safety constraints (e.g., globally avoiding obstacles), or it can shrink when enforcing reachability constraints (e.g., eventually visiting a region). However, when STL specifications involve nested operators, disjunctions, or dynamic requirements that cause the safe set to move in the state-space, standard CBFs (including time-varying variants) become insufficient. While sequential CBFs \cite{buyukkocak2022control,BUYUKKOCAK2024104681} can encode richer STL specifications with dynamic predicates, their integration into an RL setting has not been explored yet. The original sequential CBF formulation does not account for system disturbances or the presence of secondary objectives such as balancing STL satisfaction with the need for exploration during learning. These aspects are essential in RL settings, and we address them to extend the use of sequential CBFs in RL problems.

\subsection{Sequential CBFs for Dynamic STL Specifications}
Sequential CBFs were first introduced in \cite{BUYUKKOCAK2024104681} and \cite{buyukkocak2022control} to solve motion planning problems with dynamic STL objectives. In this paper, we build on the framework of sequential CBFs and enable RL that enforces rich STL constraints during training.
In the sequential CBF formulation, the predicates of the desired STL formula are related to the dynamic target locations. As discussed in \eqref{target_dynamics}, we assume that the exact target locations are not known, but their control inputs are bounded by a known constant. 
Accordingly, the worst-case future distances to/between target regions can be defined as follows:
\vspace{-2mm}
\begin{definition}[Worst-Case Distance to Target] Let $x \in \mathcal{X}$ be the agent's state, $\mathcal{P} \subseteq \mathcal{X}$ be a compact set corresponding to the satisfaction of an inner specification $\varphi$ as defined in (\ref{eq:STL_syntax}) (i.e., $\mathcal{P}:= \{x \in \mathcal{X} \mid x \models \varphi \}$), and $\remainingtime \in \mathbb{R}_{\geq 0}$ be the remaining time to satisfy $\varphi$. Then the worst-case distance is defined as:
\begin{equation}
    Dist_{wc}(x,\mathcal{P},\tau) := Dist(x, \mathcal{P}) + u_{\mathcal{P},\max} \cdot \remainingtime,
    \label{distance_agent}
\end{equation}
where $Dist(x, \mathcal{P})$ is the signed distance from $x$ to $\mathcal{P}$. 
\end{definition}
Additionally, the worst-case pairwise distance between two ordered targets is defined as follows:
\begin{equation}
    Dist_{wc}(\mathcal{P}_i,\mathcal{P}_j, \remainingtime_i, \remainingtime_j) := \sup\limits_{x \in \mathcal{P}_i} \inf\limits_{x^* \in \mathcal{P}_j}\|x-x^*\| + u_{\mathcal{P}_i, \max} \cdot \remainingtime_i + u_{\mathcal{P}_j, \max} \cdot \remainingtime_j,
\label{distance_pairwise}
\end{equation}


To determine the sequence in which the dynamic targets associated with STL tasks should be visited, we use a variant of the sequence generation algorithm proposed in \cite{BUYUKKOCAK2024104681}. The original algorithm generates a feasible ordering of tasks based on their remaining time and worst-case distances as defined in (\ref{distance_agent}) and (\ref{distance_pairwise}). By considering the worst-case distances, it ensures that the generated sequence is feasible under the given dynamic targets, even without knowledge of their trajectories. Unlike \cite{BUYUKKOCAK2024104681}, we introduce a minor modification to the algorithm to better align with the RL setting. While sequences involving frequent visits to targets in periodic tasks still constitute correct solutions, they leave less time for exploration. Therefore, we prioritize candidate sequences with fewer repetitions for periodic STL tasks.


After generating a feasible sequence $\mathcal{S}$ of $k$ tasks, we construct the following sequential CBF candidate for each task in the sequence:

\begin{flalign}
& \bm{b}_i(x, t) := \bar{\bm{\remainingtime}}_{\mathcal{S}_i}(t)
- \frac{Dist_{wc}\left(x, \mathcal{P}_{\mathcal{S}_1}, \bar{\bm{\remainingtime}}_{\mathcal{S}_1}(t)\right)}{u_{\max} - d_{\max}} \label{sequentialCBF} \\
&- \frac{\sum_{l=2}^{i} Dist_{wc}\left(\mathcal{P}_{\mathcal{S}_{l-1}}, \mathcal{P}_{\mathcal{S}_l}, \bar{\bm{\remainingtime}}_{\mathcal{S}_{l-1}}(t), \bar{\bm{\remainingtime}}_{\mathcal{S}_l}(t)\right)}{u_{\max} - d_{\max}}, \ \forall i \in \mathcal{S}, \notag
\end{flalign}
where $\bar{\bm{\remainingtime}}_{\mathcal{S}_i}(t) = \min_{j \in \{i,\dots,|\mathcal{S}|\}} \bm{\remainingtime}_{\mathcal{S}_j}(t)$ denotes the actual remaining time available for completing task $\mathcal{S}_i$ while preserving feasibility of all subsequent tasks in the sequence. In other words, task $\mathcal{S}_i$ must be completed early enough so that the remaining tasks can still be satisfied before their respective deadlines. Here, $\bm{\remainingtime}_{\mathcal{S}_j}: \mathbb{R}_{\geq 0} \rightarrow \mathbb{R}_{\geq 0}$ is the remaining time to satisfy the $j^{th}$ task in the sequence $\mathcal{S}$, defined as $\bm{\remainingtime}_{\mathcal{S}_j}(t) = \bm{\remainingtime}_{\mathcal{S}_j}(0)-t$. Note that \eqref{sequentialCBF} is different from the original formulation in \cite{BUYUKKOCAK2024104681} by subtracting the maximum disturbance bound $d_{\max}$ from the agent's maximum input $u_{\max}$ in the denominator. This modification ensures that the STL task is satisfiable under the worst-case disturbance.


To ensure that the system satisfies the given STL tasks in the specified order, it must be governed by the most time-critical CBF among all candidate CBFs. We determine the most time-critical CBF via,
\begin{equation}
    \bm{b}(x,t) = \min_{i \in \mathcal{S}} \bm{b}_i(x,t).
\label{critical_CBF}
\end{equation}

Extending this framework to other dynamical systems requires modifying the reachability checks in \ref{sequentialCBF}. The single integrator dynamics guarantee this reachability via omnidirectional velocity limits. For more complex dynamics, similar CBFs can be conservatively formulated utilizing turn-to-go profiles that account for the time required to rotate and translate, though a comprehensive reachability analysis for such systems remains outside the scope of this paper.

\subsection{Proposed STL-Constrained RL Framework}
We focus on a model-free RL setting, where the agent learns a policy by interacting with the environment without prior knowledge of the 
system dynamics or reward function. In this setting, the agent aims to maximize the cumulative episodic rewards through trial and error. Our approach is independent of the RL algorithm and can be integrated with policy gradient algorithms, such as Proximal Policy Optimization (PPO) \cite{schulman2017proximalpolicyoptimizationalgorithms}, Soft Actor-Critic (SAC) \cite{haarnoja2018softactorcriticoffpolicymaximum}, or Trust Region Policy Optimization (TRPO) \cite{schulman2017trustregionpolicyoptimization}.

While these RL algorithms are efficient at learning reward-driven behaviors, as discussed in section \ref{subsec:RL_STL}, they do not inherently guarantee the satisfaction of STL tasks during learning. To ensure that the given STL tasks are satisfied during learning, we embed the sequential CBFs defined in (\ref{sequentialCBF}) into RL. After identifying the most time-critical CBF (\ref{critical_CBF}) for a task sequence $\mathcal{S}$ including $k$ targets, we solve the following Quadratic Program (QP) at each time step $t$ to compute a corrective control $u^{CBF}$:
\begin{flalign}
& (u^{CBF}(x, t, u^{RL}(x)), \epsilon) = \argmin \limits_{u, \epsilon} \|u \|^2 + K_{\epsilon} \epsilon && \label{cbf_qp} \\
& \text{s.t.} \ \frac{\partial \bm{b}(x,t)}{\partial x}^T (u + u^{RL}(x)) + \frac{\partial \bm{b}(x,t)}{\partial x^\text{target}}^T u^\text{target} + \frac{\partial \bm{b}(x,t)}{\partial t} && \notag \\
& \quad + \alpha(\bm{b}(x,t))\geq -\epsilon && \notag\\
& \quad \ \|u + u^{RL}(x)\| \leq u_{\max}, && \notag
\end{flalign}
where $u^{RL}(x) \sim \pi_\theta(\cdot | x)$ is the output of the policy network parameterized by $\theta$ at state $x$, $\epsilon \in \mathbb{R}^+$ is a slack variable with sufficiently large weight penalty $K_{\epsilon}\in\mathbb{R^+}$ that ensures the feasibility of the QP, $x^\text{target} \in \mathbb{R}^{nk}$ and $u^\text{target} \in \mathbb{R}^{mk}$ are the stacked vectors of current target states and target inputs, respectively. 
To account for control input bounds, we impose the constraint $\|u + u^{RL}(x)\| \leq u_{\max}$. The slack variable $\epsilon$ relaxes the first constraint to ensure the feasibility of the QP when it cannot be strictly satisfied due to physical constraints (e.g., actuator limitations). The solution to the QP in (\ref{cbf_qp}) enforces the satisfaction of the given STL task with minimal time relaxation and control effort. The following theorem formalizes this property by providing a lower bound on the task violation (temporal relaxation).
\begin{theorem}
Consider an agent with dynamics as in (\ref{agent_dynamics}), and let $\bm{b}(x, t)$ be a sequential CBF as defined in (\ref{critical_CBF}). Then, if $\alpha (\bm{b}) = \gamma \bm{b}$ for some $\gamma > 0$ and $\bm{b}(x_0, 0) \geq 0$, the value of the barrier function along the system trajectory is bounded from below as:
\begin{equation}
    \bm{b}(x, t) \ge -\frac{\epsilon}{\gamma}.
\end{equation}
In other words, the worst-case violation is bounded below by $-\epsilon/\gamma$, and the controller derived from (\ref{cbf_qp}) renders the agent forward invariant within a relaxed set defined by:
\[
\mathcal{C}_\epsilon = \left\{ x \in \mathbb{R}^n \mid \bm{b}(x, t) \ge -\frac{\epsilon}{\gamma} \right\}.
\]
\label{bound_theorem}
\end{theorem}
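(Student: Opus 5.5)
The plan is a standard comparison-lemma argument applied to the scalar signal $\beta(t) := \bm{b}(x(t),t)$ along the closed-loop trajectory. Treat $\epsilon$ as the (uniform) slack, i.e.\ a bound $\epsilon \ge \epsilon(t)$ on the QP slack at every time.

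\textbf{Step 1: translate the QP constraint into a differential inequality.} With $\tilde u = u^{CBF}+u^{RL}$, the first constraint of (\ref{cbf_qp}) gives
\[
\frac{\partial \bm{b}}{\partial x}^T \tilde u + \frac{\partial \bm{b}}{\partial x^\text{target}}^T u^\text{target} + \frac{\partial \bm{b}}{\partial t} \ge -\gamma \bm{b} - \epsilon .
\]
The true dynamics (\ref{agent_dynamics}) also contain the disturbance $d$, and the target inputs are unknown. The left-hand side therefore equals $\dot\beta$ only up to the term $\frac{\partial \bm{b}}{\partial x}^T d$ and the substitution of the true $u^\text{target}$.

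\textbf{Step 2: dispose of the disturbance and target terms.} I will argue these can only help, using the construction (\ref{sequentialCBF}).
\begin{itemize}
\item $\bm{b}_i$ depends on $x$ only through $-Dist(x,\mathcal{P}_{\mathcal{S}_1})/(u_{\max}-d_{\max})$. The signed distance is $1$-Lipschitz, so $\|\partial \bm{b}/\partial x\| \le 1/(u_{\max}-d_{\max})$.
\item Hence $\frac{\partial \bm{b}}{\partial x}^T d \ge -d_{\max}/(u_{\max}-d_{\max})$.
\item The worst-case terms $u_{\mathcal{P},\max}\tau$ in (\ref{distance_agent})--(\ref{distance_pairwise}) already dominate any motion of the targets. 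Thus the combined contribution of the actual target motion and the $\partial/\partial t$ of these terms is no worse than the worst case assumed in the QP.
\item Reserving $d_{\max}$ in the denominator means that, after accounting for the disturbance, the nominal rate condition still holds.
\end{itemize}
If this accounting cannot be made exact, I would interpret the QP constraint as evaluated with worst-case $d$ and $u^\text{target}$, i.e.\ as a robust constraint. The conclusion is then $\dot\beta \ge -\gamma\beta - \epsilon$ almost everywhere.

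There is a nonsmoothness issue here. $\bm{b}$ is a min over $i$, and $\bar\tau$ is also a min, so $\beta$ is only absolutely continuous. I would handle this with the upper Dini derivative: at each time, use the active index $i$ attaining the minimum, for which the inequality holds.

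\textbf{Step 3: comparison lemma.} Compare with the ODE $\dot y = -\gamma y - \epsilon$, $y(0)=\beta(0)\ge 0$, whose solution is
\[
y(t) = \left(\beta(0)+\tfrac{\epsilon}{\gamma}\right)e^{-\gamma t} - \tfrac{\epsilon}{\gamma}.
\]
The comparison lemma gives $\beta(t)\ge y(t)\ge -\epsilon/\gamma$ for all $t$. A cleaner alternative is to set $z=\beta+\epsilon/\gamma$, so that $\dot z\ge -\gamma z$ with $z(0)\ge 0$, and Grönwall gives $z(t)\ge z(0)e^{-\gamma t}\ge 0$.

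Forward invariance of $\mathcal{C}_\epsilon$ then follows immediately. If the slack is time-varying, the same argument applies with $\epsilon:=\sup_t \epsilon(t)$.

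\textbf{Main obstacle.} I expect Step 2 to be the hard part: rigorously showing that the disturbance and the unknown target motion do not degrade the inequality. It relies on the specific normalization by $u_{\max}-d_{\max}$ and the worst-case distance terms. If a clean domination argument fails, I would fall back on the robust-constraint interpretation of the QP.
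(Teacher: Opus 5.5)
Your Steps 1 and 3 are the paper's proof. The paper takes $\dot{\bm{b}}+\gamma\bm{b}\ge-\epsilon$ along trajectories directly from (\ref{cbf_qp}), multiplies by $e^{\gamma t}$ and integrates from $0$ to $t$; that is the same computation as your Gr\"onwall step on $z=\beta+\epsilon/\gamma$. You go beyond the paper only in Step 2, and there the domination argument does not work for the disturbance.

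The target term is not the problem, since the QP uses the current observed $u^\text{target}$ and so that term is exact. The disturbance is the problem. The constraint in (\ref{cbf_qp}) contains no $d$, and whenever it is active at the minimum-norm optimum it holds with equality. So along (\ref{agent_dynamics}), with $\beta(t)=\bm{b}(x(t),t)$,
\[
\dot\beta+\gamma\beta=-\epsilon+\frac{\partial\bm{b}}{\partial x}^T d ,
\]
and by your own Lipschitz bound the last term can be as low as $-d_{\max}/(u_{\max}-d_{\max})$. Reserving $d_{\max}$ in the denominator does not absorb this term; it buys \emph{feasibility}. Steering at full speed toward $\mathcal{P}_{\mathcal{S}_1}$ gives $\frac{\partial\bm{b}}{\partial x}^T(\tilde u+d)\ge 1$ for every admissible $d$. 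Together with $\partial\bm{b}/\partial t$ and the $u_{\mathcal{P},\max}\tau$ padding that dominates the targets' motion (your third bullet), this yields $\dot{\bm{b}}\ge 0$ robustly. That shows a good input exists, but it says nothing about the minimum-norm input the QP actually returns, which leaves no margin for $d$. Taken literally, the nominal QP only gives $\bm{b}\ge-\bigl(\epsilon+d_{\max}/(u_{\max}-d_{\max})\bigr)/\gamma$.

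So your fallback is required, not optional. Getting $\dot{\bm{b}}+\gamma\bm{b}\ge-\epsilon$ along the true trajectory requires reading the constraint in (\ref{cbf_qp}) robustly, for example with an extra $-\|\partial\bm{b}/\partial x\|\,d_{\max}$ on its left-hand side, or else taking $d\equiv 0$. That is exactly the unstated premise behind the paper's one-line ``from (\ref{cbf_qp}), along the system trajectories'' step. Once you adopt it as an assumption instead of trying to derive it from the normalization, your proof and the paper's coincide.

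Your remaining additions are refinements the paper omits:
\begin{itemize}
\item Replacing $\epsilon$ by $\sup_t\epsilon(t)$ is correct, since the QP slack varies in time while the theorem treats it as a constant.
\item For the nonsmooth min in (\ref{critical_CBF}), argue almost everywhere rather than pointwise with Dini derivatives. At ties the QP enforces the inequality for only one $\bm{b}_i$, so a pointwise argument does not go through. Almost everywhere, the derivative of a minimum of absolutely continuous functions equals that of every active one, which is what you need.
\end{itemize}
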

\noindent
\hspace{-7mm}\begin{proof}
From (\ref{cbf_qp}), along the system trajectories we have:
\begin{equation}
    \dot{\bm{b}}(x, t) + \gamma \bm{b}(x, t) \geq -\epsilon.
\end{equation}
Multiplying both sides by the integrating factor $e^{\gamma t}$:
\begin{equation}
    \dot{\bm{b}}(x, t) \, e^{\gamma t}  + \gamma \bm{b}(x, t) \, e^{\gamma t} \geq -\epsilon \, e^{\gamma t}.
\end{equation}
Applying the product rule on the left-hand side:
\begin{equation}
    \frac{d}{dt}\left(e^{\gamma t} \bm{b}(x, t)\right) \geq -\epsilon \, e^{\gamma t}.
\end{equation}
Integrating both sides from $0$ to $t$ and solving for $\bm{b}(x, t)$ yields:
\begin{equation}
    \bm{b}(x, t) \geq -\frac{\epsilon}{\gamma} + \left(\bm{b}(x_0, 0) + \frac{\epsilon}{\gamma}\right) e^{-\gamma t} \geq -\frac{\epsilon}{\gamma},
\end{equation}
where the last inequality follows from $\bm{b}(x_0, 0) \geq 0$ and $e^{-\gamma t} \geq 0$ for all $t \geq 0$.
\end{proof}

Theorem 1 can be interpreted as an upper bound on the maximum time-window relaxation in STL satisfaction. In other words, if the agent fails to satisfy a task within the original time bound, the temporal violation (relaxation) is bounded and corresponds to at most $\frac{\epsilon}{\gamma}$ time steps.

After finding the corrective control $u^{CBF}$ by solving the QP, the final control executed by the agent at state $x$ and time $t$ is as follows:
\begin{equation}
    \tilde{u}(x, t) = u^{RL}(x) + u^{CBF}(x, t, u^{RL}(x)).
\end{equation}
Here, the corrective control $u^{CBF}$ modifies the unconstrained RL output $u^{RL}$ to ensure that the executed control respects the STL specification.

To improve the generalization of the learned policy, the agent's state should be randomized at the start of each learning episode \cite{zhang2018studyoverfittingdeepreinforcement}. However, the initial states must be chosen such that the given STL task can be satisfied starting from that state. For example, if the agent is initially far away from the target regions, it might not be able to visit all targets within the desired time windows. Here, we formally define the set of feasible initial states from which the given STL task can be satisfied:

\begin{definition} [Feasible State Set] Given an MDP $\mathcal{M} = (\mathcal{X}, \mathcal{U}, p, r)$, an STL specification $\mathnormal{\Phi}$ which incorporates dynamic targets, we define the feasible state set as follows:
\begin{equation}
    \mathcal{X}_{feasible}^{\mathnormal{\Phi}} = \{x \in \mathcal{X} \mid S_{feasible}^{\mathnormal{\Phi}}(x) \neq \emptyset \},
\end{equation}
where $S_{feasible}^{\mathnormal{\Phi}}(x)$ is the set of feasible sequences for the specification $\mathnormal{\Phi}$ starting from state $x$.
\end{definition}

In other words, if a feasible sequence can be found from a given state, then the overall STL task, as in \eqref{overallSTL}, can be satisfied starting from that state. Having a feasible sequence simply means that $\bm{b}_i(x,t)$ is nonnegative for all tasks $i$ in the sequence, which means each task can be achieved on time. The complete learning algorithm is given in Alg. \ref{main_alg}.

\begin{algorithm2e}[!b]
\caption{STL Constrained RL}
\label{main_alg}
\SetKwInOut{Input}{Input}
\SetKwInOut{Output}{Output}
\Input{MDP $\mathcal{M}$, episode horizon $T$, number of training iterations $N$, STL specification $\mathnormal{\Phi} = \mathnormal{\Phi_1} \land \cdots \land \mathnormal{\Phi_k}$}
Initialize empty buffer $\mathcal{B} = \emptyset$ and policy $\theta \leftarrow \theta_0$

\For{$i$ from $0$ to $N$ }{
    Randomly sample $x_0$ from $\mathcal{X}_{feasible}^{\mathnormal{\Phi}}$
    
    Reset environment

    Generate a feasible sequence $\mathcal{S}$ starting from $x_0$ using a variant of the algorithm in \cite{BUYUKKOCAK2024104681}

    \For{$t$ from $0$ to $T$}{
        $u^{RL}_t = \pi_{\theta_i}(x_t)$
        
        \If{$\mathcal{S} \neq \emptyset$}{

            Construct CBFs and determine the most time-critical one $\bm{b}$ via (\ref{sequentialCBF}) and (\ref{critical_CBF})

            $u_t^{CBF} = QP(x_t, u_t^{RL}, \bm{b}) \quad \triangleright$ \: (\ref{cbf_qp})

            $\tilde{u}_t = u_t^{RL} + u _t^{CBF}$
            }      

        \Else{
            $\tilde{u}_t = u^{RL}_t$

        }
    Take action $\tilde{u}_t$, observe $x_{t+1}$ and $r_t$

    Add transition $(x_t, \tilde{u}_t, x_{t+1}, r_t)$ to $\mathcal{B}$

    \If{$\mathcal{S} \neq \emptyset $}{
        \If{$\mathcal{S}(1)$ is hit}{
        
        $\mathcal{S} = \mathcal{S} \setminus \{ \mathcal{S}(1)\}$
        }

        \If{$\mathcal{S}(1)$ has alternative targets }{
        Calculate the CBF values for the alternatives via (\ref{sequentialCBF})
        
        Update $\mathcal{S}$ if any alternative yields a higher CBF value
        }
        
    }
        }

    Update $\theta_{i}$ using $\mathcal{B}$ via the selected RL algorithm to obtain $\theta_{i+1}$
}
\end{algorithm2e}

Algorithm \ref{main_alg} starts by initializing an empty replay buffer and policy network weights (line 1). At the beginning of each learning iteration, a feasible initial state is sampled, and a task sequence is generated (lines 3-5). At every step, the agent selects an action ($u^{RL}$) from its current policy network (line 7). If the current task sequence is nonempty, a quadratic program is solved to find the corrective control ($u^{CBF}$), which is then added to the policy action (lines 8-11); otherwise, the unmodified action is used as the given STL task is already satisfied (line 13). The resulting action is executed, and the transition is saved in the buffer (lines 14-15). When the agent reaches the target region associated with the first task $\mathcal{S}(1)$, it is removed from the sequence (lines 16-18). If the new first task has alternative targets (i.e., involves disjunction), we evaluate the corresponding CBF values and replace $\mathcal{S}(1)$ with the alternative yielding the highest value (lines 19–21).
Finally, the policy network is updated using the experience stored in the buffer via the selected RL algorithm (line 22).

\section{CASE STUDIES}
We validate the proposed framework through two case studies in a simulation environment. For the learning algorithm, we employ Soft Actor-Critic (SAC) \cite{haarnoja2018softactorcriticoffpolicymaximum}. The policy (actor) and value (critic) networks are feed-forward neural networks consisting of 4 hidden layers of size 256. Each learning episode has a finite length and terminates only when the horizon is reached. To improve generalization, the agent's state is randomized within the feasible state set between episodes. The implementation is done in Python 3.10 and trained on an NVIDIA RTX 4070 GPU.

\begin{figure}[!b]
    \centering
    \begin{subfigure}[b]{0.495\linewidth}
        \centering
        \includegraphics[width=\linewidth]{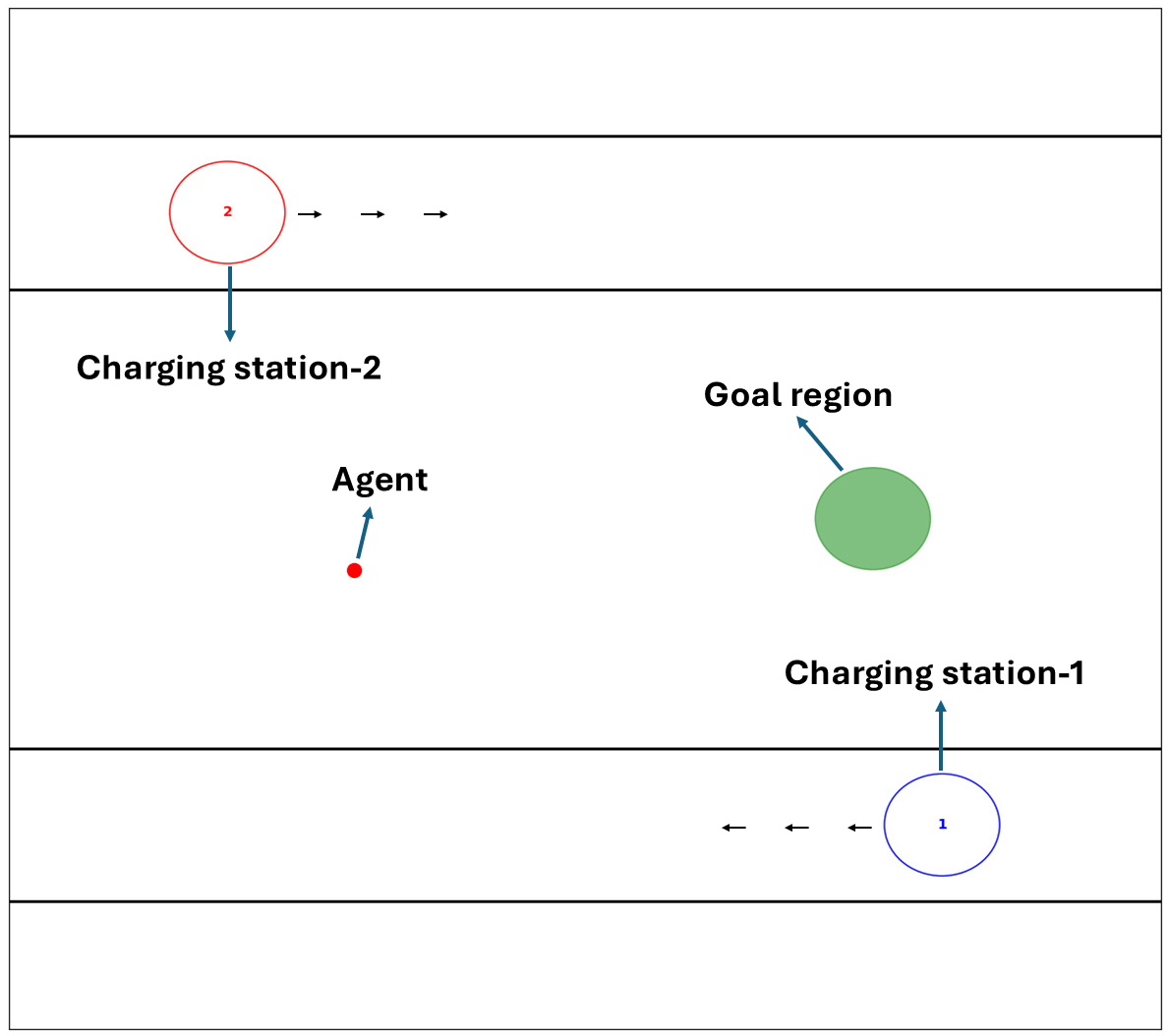}
        \captionsetup{font=footnotesize, labelfont=footnotesize}
        \caption{}
        \label{fig:case1_simulation_env}
    \end{subfigure}
    \hspace{-0.2cm}
    \begin{subfigure}[b]{0.495\linewidth}
        \centering
        \includegraphics[width=\linewidth]{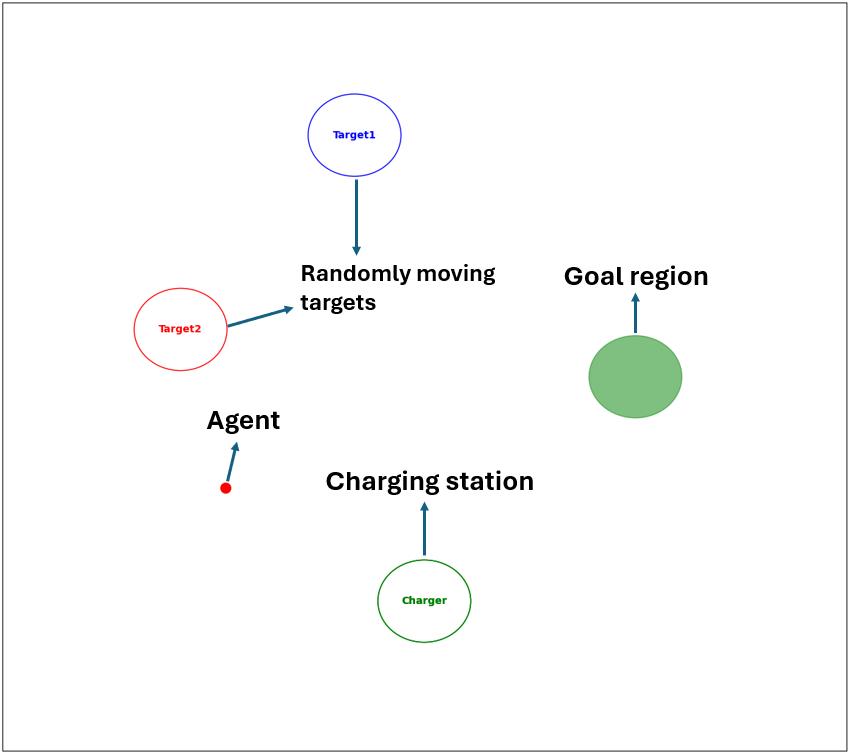}
        \captionsetup{font=footnotesize, labelfont=footnotesize}
        \caption{}
        \label{fig:case2_simulation_env}
    \end{subfigure}
    \captionsetup{font=footnotesize, labelfont=footnotesize}
    \caption{Simulation environment setups: (a) Case 1 and (b) Case 2.}
    \label{fig:simulation_envs}
\end{figure}

In all case studies, the agent follows the dynamics in (\ref{agent_dynamics}) and operates under bounded and uniformly distributed disturbance. Its primary learning objective is to reach a predefined goal region and monitor it under unknown disturbance. Note that the future trajectories of the targets are unknown to the agent. Instead, it can only access instantaneous positions and velocities of the targets, which can be obtained through sensor-based methods.

\textbf{Case 1.} We consider a scenario where the agent must periodically recharge its battery during training. Figure \ref{fig:case1_simulation_env} illustrates the simulation environment used for this case. Two mobile charging stations move back and forth along their respective roads according to the dynamics in \ref{target_dynamics}, and the agent must visit either one of them periodically in each learning episode. We express this task with the following STL specification:
\begin{equation}
    \varPhi_1 = \LTLALWAYS_{[0,210]} \LTLEVENTUALLY_{[0,90]}(Charger_1 \orltl Charger_2).
\end{equation}
This specification enforces that, within each learning episode of length 300 time steps, the agent must visit at least one of the chargers within every window of 90 time steps. Due to the sequence update mechanism in Alg. \ref{main_alg} (lines 19-21), the agent always navigates to the nearest charging station as it yields a higher CBF value. 

Figure \ref{fig:case_learning_curves} compares the learning curves of SAC networks trained with and without the STL task $\varPhi_1$. The unconstrained SAC agent achieves higher episodic rewards as it can focus solely on reward maximization without considering temporal logic tasks. On the other hand, the agent trained under the STL task $\varPhi_1$ converges to a lower episodic reward since it must periodically leave the high-reward area to satisfy the STL task. 
Furthermore, the constrained agent achieved a 99.56\% satisfaction rate of the STL task, which demonstrates that our framework was able to enforce the given STL task effectively throughout the entire learning process. 
\vspace{-2mm}
\begin{remark}
   Rare failure cases occur when the agent reaches the charging stations slightly after the specified time windows, which is effectively extending the deadline instead of failing the task. This is due to the fact that the CBF condition in (\ref{cbf_qp}) is relaxed by a slack variable when input bounds prevent satisfying the inequality. However, Theorem \ref{bound_theorem} provides a lower bound on the resulting time relaxation.
\end{remark}
\vspace{-2mm}
\begin{figure}[!b]
    \centering
    \includegraphics[width=0.9\linewidth]{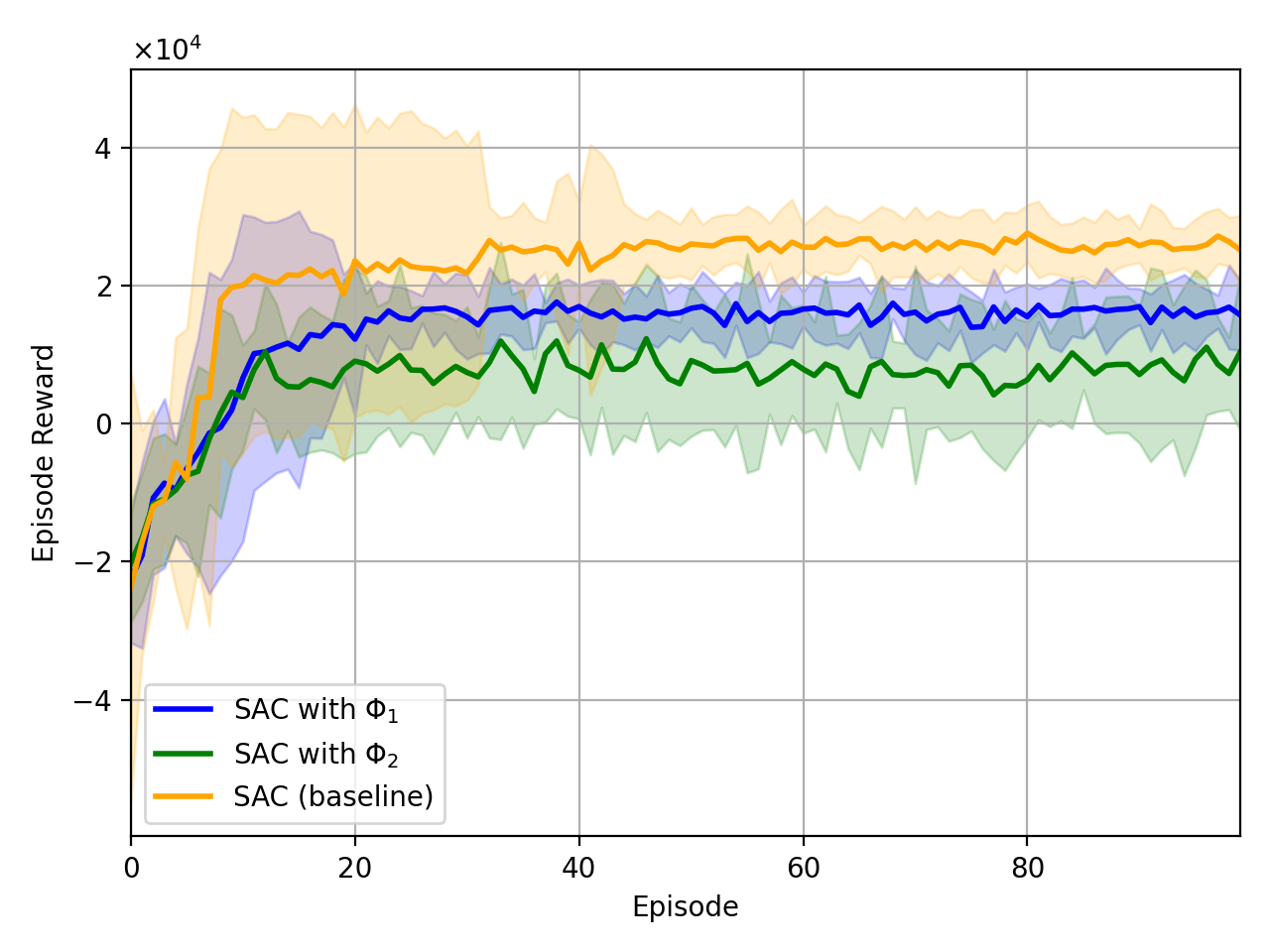}
    \captionsetup{font=footnotesize, labelfont=footnotesize}
    \caption{Learning curve comparisons of SAC models trained without STL tasks (orange) and with the STL specifications $\varPhi_1$ (blue) and $\varPhi_2$ (green). Results are averaged over 10 independent training runs, and the shaded regions represent the confidence intervals of two standard deviations.}
    \label{fig:case_learning_curves}
\end{figure}

\textbf{Case 2.} In this case, we consider a more complex STL task where the agent needs to navigate to two different dynamic target regions within specific time windows and remain inside each for specific durations, while also visiting a dynamic charging station periodically. The STL specification that expresses this task is given as follows:
\begin{flalign}
    \varPhi_2 = & \LTLEVENTUALLY_{[0,60]} \LTLALWAYS_{[0,10]} Target_1 \andltl \LTLEVENTUALLY_{[150,180]} \LTLALWAYS_{[0,10]} Target_2 \notag \\ 
    &\andltl \LTLALWAYS_{[0,190]} \LTLEVENTUALLY_{[0,110]}Charger.
\end{flalign}
This specification requires the agent to visit $Target_1$ within the first 60 time steps and remain there for 10 consecutive steps. Then, it must visit $Target_2$ between time steps 150 and 180 and remain there for 10 steps. Additionally, it must visit the charging station every 110 time steps within each learning episode of length 300 time steps. Both $Target_1$ and $Target_2$ move randomly inside the simulation environment, while the charging station moves in a circular trajectory. Note that neither the targets' nor the charger's future trajectories are known to the agent. 

As shown in Fig. \ref{fig:case_learning_curves}, the learning curve of the SAC agent trained under the STL task $\varPhi_2$ converges to an optimal policy. Since this task requires more visits to target regions and two targets are moving randomly, the agent needs to spend a larger portion of the learning episodes on satisfying the STL task. Consequently, the learning curve for this case converges to a lower episodic reward compared to the unconstrained model and the model trained under $\varPhi_1$. 
Furthermore, the specification was satisfied with a rate of 95.2\% during the learning process. The success rate is slightly lower compared to Case 1, since $\varPhi_2$ imposes stricter deadlines with more sequential requirements, and the randomly moving targets can spread apart significantly, which forces the agent to travel longer distances within limited time windows. Similar to Case 1, the violations here are mainly satisfying the STL task with bounded temporal relaxation. A video showcasing the training processes for both Case 1 and Case 2 is available at: \url{https://youtu.be/rYU4wgfgLyA}.

\section{CONCLUSION}
We introduce a framework that extends sequential control barrier functions to RL settings to enforce complex STL tasks during learning. Unlike prior work that primarily focused on safety, our framework handles richer spatio-temporal specifications, including visits to dynamic targets with unknown trajectories. By formulating sequential CBFs under bounded disturbances and input limits, our method ensures STL satisfaction throughout the learning process rather than only after training. We also provide a theorem that quantifies a lower bound on task violation, which guarantees the achievement of the desired STL task under bounded delays. 
We validate the framework in simulation with bounded disturbances, where the agent learns its primary objective while satisfying STL tasks such as periodic visits to moving charging stations and timed visits to randomly moving targets. 
Future work will extend the framework to more general settings with more complex objectives and agent dynamics.

\bibliographystyle{IEEEtran}
\bibliography{references}

\end{document}